\newtheorem{prop}{Proposition}
\DeclareMathOperator*{\Expect}{\mathbb{E}}
\title{\LARGE \bf
Deep R-Learning for Continual Area Sweeping
}
\author{
Rishi Shah\textsuperscript{*}, Yuqian Jiang\textsuperscript{*}, Justin Hart, Peter Stone \\
Department of Computer Science, The University of Texas at Austin, Austin, TX, USA\\
\{rishihahs, jiangyuqian\}@utexas.edu, \{hart, pstone\}@cs.utexas.edu
\thanks{* Equal contribution}
}
\begin{document}

\maketitle
\thispagestyle{empty}
\pagestyle{empty}

\begin{abstract}
Coverage path planning is a well-studied problem in robotics in which a robot must plan a path that passes through every point in a given area repeatedly, usually with a uniform frequency. To address the scenario in which some points need to be visited more frequently than others, this problem has been extended to non-uniform coverage planning. This paper considers the variant of non-uniform coverage in which the robot does not know the distribution of relevant events beforehand and must nevertheless learn to maximize the rate of detecting events of interest. This \emph{continual area sweeping} problem has been previously formalized  in a way that makes strong assumptions about the environment, and to date only a greedy approach has been proposed. We generalize the \emph{continual area sweeping} formulation to include fewer environmental constraints, and propose a novel approach based on reinforcement learning in a Semi-Markov Decision Process. This approach is evaluated in an abstract simulation and in a high fidelity Gazebo simulation. These evaluations show significant improvement upon the existing approach in general settings, which is especially relevant in the growing area of service robotics.
\end{abstract}
\section{Introduction}
Consider a service robot operating in an office or home. When a user requests that the robot bring a cold beverage or pick up the mail, the robot must reason about not only the static facts, such as the locations of rooms, but also the locations of objects in its environment which can change over time. As the occupants in this environment may move objects around, efficiently servicing these requests requires continually surveying the area to keep up to date on the objects’ locations.

The problem of \textit{continual area sweeping} was introduced by Ahmadi and Stone \cite{ahmadi2005continuous} as one motivated by building maintenance tasks in which some areas of the building see higher traffic and messier activities and therefore must receive more attention. A robot performing such tasks needs to service trash cans and restrooms more frequently than closets. For example, in a cleaning setting, the robot acts optimally when the time between the appearance of a mess and cleaning it up is minimal. Ahmadi and Stone \cite{ahmadi2005continuous} formalize a process of visiting areas of the map in a gridworld in which ``events'' (representing dirt and messes) appear non-uniformly throughout the robot's environment. Unlike more classical approaches, the distribution of these events is neither known nor constant, and thus must be learned online. Performance is measured based on how long it takes from the onset of an event to its servicing. 

To model the task of a service robot surveying its environment for changes, this paper extends continual area sweeping. In the original continual area sweeping formulation, the objective is to minimize the time to detect events. This paper additionally considers the objective of maximizing the number of events detected per second (DPS). Assumptions about the distribution and appearance of events are also relaxed in order to better represent this scenario. We introduce the \textsc{dps-max} approach to maximize detections per second. \textsc{dps-max} combines a novel formulation based on a Semi-Markov Decision Process in the average reward setting, and then a deep reinforcement learning algorithm to solve it.

Evaluations of our \textsc{dps-max} approach are presented in two simulation domains. An abstract gridworld is used to compare the performance of the Reinforcement Learning (RL) approach with the approach presented by Ahmadi and Stone
\cite{ahmadi2005continuous},
which serves as a baseline. Results show \textsc{dps-max} significantly improves performance in the most general scenario, and more flexibly handles complex event patterns by leveraging extra environmental information. \textsc{dps-max} is then evaluated on a simulated service robot in Gazebo on the task of detecting object placements, where we demonstrate that, unlike the baseline, \textsc{dps-max} is able to recognize previously seen geometric features between different environments.

The primary contribution of this paper is the \textsc{dps-max} approach which combines a novel Semi-Markov Decision Process problem formulation with a deep reinforcement learning algorithm to solve it. \textsc{dps-max} addresses a general class of continual area sweeping problems, specifically those motivated by the growing area of service robots. Under the assumptions reflective of such scenarios, \textsc{dps-max} significantly outperforms the prior state-of-the-art algorithm for continual area sweeping.

\section{Continual Area Sweeping}\label{sec:problem_formulation}
In the continual area sweeping task, a robot continually travels in an environment with the goal of detecting or reacting to events of interest. The environment is represented as a 2D map which is divided into a set of discrete grid cells $g \in G$. A set of events $e \in E$ can occur anywhere in the environment at any time $t$, and the distribution of events is unknown. The robot makes sequential decisions which are broken down into discrete decision steps $n \in \mathbb{N}$. At each decision step $n$, the robot can take an action $a_n$ to move to any reachable grid cell $g$ (including staying at the current cell). This action space focuses on the decision of where to visit, and assumes the shortest path will be taken. When an action is executed, the robot is able to detect any events in every grid cell along its path. The number of such detections is $d_n$. Note that the robot must physically travel from grid cell to grid cell, and as such may take a variable length of time to do so. Following this fact, the wall-clock time of decision step $n$ is denoted as $t_n$, and the problem is modeled as a Semi-MDP (see Section \ref{sec:smdp}).

\subsection{Metrics}
We define two metrics, \textit{average detection time (ADT)} and \textit{detections per second (DPS)}, each appropriate for a different class of applications.

\textit{Average detection time (ADT)} is the average time elapsed from occurrence to detection of events. More formally, let $o(e)$ denote the time when event $e$ occurs in the grid, and let $s(e)$ denote the time at which event $e$ was seen. If $e$ has never been detected, then let $s(e)$ be the current time. Then ADT is $\frac{1}{m}\sum_{i=1}^m (s(e_i) - o(e_i))$, where $m$ is the total number of events that have occurred. This metric is used in the original continual area sweeping formulation \cite{ahmadi2005continuous}. If the goal of the robot is to be highly responsive to emergencies, such as spilled drinks for a maintenance robot, then it is appropriate to minimize average detection time.

\textit{Detections per second (DPS)} is the average of the number of events detected per unit time, computed as $\frac{1}{t_n}\sum_{i=1}^n d_i$. If the goal of the robot is to maintain up-to-date information in its environment, then it should detect as many changes as possible over time. Thus, maximizing detections per second is more meaningful.

Both metrics are defined in the continual setting, so the most relevant observation is the long term average as $m$ and $n$ become arbitrarily large.

\subsection{Assumptions}

We assume that at each time step, the number of events in a grid cell $g \in G$ has an upper bound.
Events can also stop after they occur. For example, in the object tracking task, if a water bottle is placed on a desk, and its owner later picks it up, the event of the disappearance of the object overwrites the event of its appearance. In realistic domains, the bound on number of events in each grid cell is usually close to 1 for a fine enough grid representation.

\section{Related Work}
Coverage path planning is a family of problems in which an agent is given a map of its environment and must generate a navigational path that covers its environment. 
The family spans three main categories, which we survey below.

\subsection{Uniform Coverage}
Uniform coverage, also known as sweeping, has an agent generate a navigational path such that the agent passes through the entire volume of the map \cite{choset2001coverage,gabriely2001spanning,Galceran:2013:SCP:2542686.2542724}. This approach is useful for a variety of applications where the robot must travel over the entire area, such as lawn mowing or vacuum cleaning. 

\subsection{Adversarial Coverage}
Patrolling is a related problem that operates in a similar setting. Gatti \cite{gatti2008game} describes a game theoretic approach based on adversarial guard and robber agents that act strategically. Much of the other work in this area is concerned with such an adversarial two player game scenario \cite{basilico2012patrolling,bovsansky2011computing}. This work focuses on non-adversarial settings motivated by service robot environments where events from tasks such as cleaning or semantic mapping do not involve opponents. 

\subsection{Non-uniform Coverage}
Uniform coverage can be wasteful when areas of a map do not have equal importance. In such a setting, non-uniform coverage approaches optimize some metric by giving more focus to certain areas. 

\subsubsection{Coverage with Metrics}

Ergodic coverage aims to optimize the ergodicity metric, in which time spent in a given area is related to the spatial distribution of regions of interest over that area \cite{ayvali2017ergodic}. Information surfing is also a non-uniform coverage approach but instead seeks to maximize discriminatory information by planning a path that exploits local information gradients~\cite{infosurfing}. 
Most non-uniform coverage approaches focus on planning with a given distribution of events. 

In some applications such as surveillance, a robot may need to cover an area repeatedly with an unknown or changing distribution of events. More recent work relaxes the assumption of having a known events distribution in ergodic coverage. Mavrommati et. al. present an adaptive planning approach that works with a changing events distribution while still optimizing the ergodicity metric \cite{mavrommati2017real}. Continual area sweeping is a different non-uniform coverage problem where an area has to be covered repeatedly with an unknown and changing event distribution while optimizing the ADT or DPS metric. Unlike ergodic coverage which only cares about how much time is spent proportionally in each region, in continual area sweeping \emph{when} each region is visited is also important.


\subsubsection{\textsc{adt-greedy} Algorithm}\label{adtgreedydefinition}
The closest work to this paper is by Ahmadi and Stone \cite{ahmadi2005continuous} who introduced the non-uniform continual area sweeping problem and proposed a greedy algorithm that minimizes the average detection time (ADT) while learning a changing distribution of events. For the remainder of this paper, this approach will be referred to as \textsc{adt-greedy}. 

\textsc{adt-greedy} makes two assumptions that are revisited in the current work. The first is an assumption that at each time step there is a fixed probability $p_g$ for an event occurring at grid cell $g$. It follows that the number of events in each grid cell follows a binomial distribution $B(t, p_g)$, where $t$ is the number of time steps since the cell was last visited. Henceforth, this will be referred to as the \emph{binomial assumption}. The second assumption is that there is no upper bound on the number of events per cell, henceforth called the \emph{unbounded events assumption}.
A convenient consequence of these assumptions is that the expected number of events in a cell is linear in the time since the cell was last visited. This linearity exists because the expectation with respect to the Binomial distribution $B(t, p_g)$ is $t \cdot p_g$. Ahmadi and Stone \cite{ahmadi2005continuous} show that under these conditions maximizing the total expected number of detected events is the same as minimizing ADT.
The \textsc{adt-greedy} algorithm consists of a learning module that learns $p_g$ for every grid cell $g$. A planning module then greedily chooses the target cell that leads to the path with the highest expected number of events. 

This paper shows that \textsc{adt-greedy} leads to suboptimal behavior when the assumptions are violated, which motivates using reinforcement learning to maximize continual area sweeping metrics without directly learning the event distribution.

\section{Approach}\label{sec:approach}

This section proposes a novel formulation of continual area sweeping as a Semi-Markov Decision Process which is then solved using a deep RL approach. The combination of this novel formulation and algorithm to solve it is called \textsc{dps-max}, which provably maximizes average detections per second (DPS).

\subsection{Semi-MDP Model}\label{sec:smdp}

The proposed \textsc{dps-max} approach is the combination of a novel formulation of the continual area sweeping problem as a Semi-Markov Decision Process (SMDP) along with a deep reinforcement learning algorithm to solve it. This section describes the SMDP formulation, which consists of $(\mathcal{S}, \mathcal{A}, R, P)$:

    $\mathcal{S}$ is the state space. Each state consists of three components:
    
        \par \textbf{2D costmap}: Notated as $G$ in Section \ref{sec:problem_formulation}, the discretized grid of the environment is included. This grid is represented by a 2D array where a cell that has an obstacle is given a value of $1$, and others are given a $0$.
        \par \textbf{Robot position}: The robot's position in the environment is represented with a grid where the robot's current cell is $1$ and the remaining cells are $0$.
        \par \textbf{Event uncertainty}: Suppose $t$ seconds have passed since the robot has visited a particular cell. When $t = 0$, it is known that the robot has seen all of the events in the cell, but as $t$ increases so does uncertainty. Encoding this uncertainty allows the robot to take events it has seen into consideration when making decisions. Under a Poisson distribution, the probability that $0$ new events have appeared in a cell after time $t_d$ is $\exp{(-\alpha t_d)}$, with $\alpha$ the rate at which events appear. Each grid cell is filled in with this probability. $t_d = \infty$ if the cell has never been seen. This formulation does not assume that event appearance is exactly Poisson; rather, these probabilities provide initial information that the function approximator can later use to learn the true dynamics of event appearance.

    Combined with a CNN function approximator, these grid representations encode the assumption that local spatial regions of the state should be similar (see Section \ref{sec:nnet}).
    
    $\mathcal{A}$ is the action space, which includes all empty cells in $G$, or in other words, the free spaces that the robot can navigate to. The robot takes one action $a_n$ at each decision step $n$. The key difference between an SMDP and an MDP is that different actions can have different durations. It takes time for the robot to physically move, so actions that move the robot to a far away cell take more time than actions that cause the robot to navigate a shorter distance.
    
    $P: \mathcal{S}\times\mathcal{A} \to \mathcal{S}$ is the transition kernel. A key component of $P$ is the unknown probability of event appearance.
    
    $R: \mathcal{S}\times\mathcal{A}\times\mathcal{S} \to \mathbb{R}$ is a measurable function denoting the reward given for a transition, defined in section \ref{sec:rewardconstruction}.

A stationary policy $\pi$ describes the action to take in a given state, and is thus a map from $\mathcal{S}$ to probability measures on $\mathcal{A}$.

The most common way to deal with non-episodic tasks is discounting future rewards. In the continual setting, the goal is not to maximize total rewards, but rather to optimize long-term averages. Thus we instead use the following average reward formulation:

$\rho^\pi$ is the average reward function:

\begin{equation}\label{averagereward}
    \rho^\pi(\mu) \coloneqq \liminf_{n \to \infty} \frac{1}{n} \Expect{\Big [ \sum_{k = 0}^{n-1} R(s_k, a_k, s_{k+1}) \Big ]}
\end{equation}

\noindent where $\mu$ is an initial state distribution, and the expectation is taken with respect to the appropriate measure derived from $\pi$ and $\mu$ \cite{feinberg1996measurability}. For convenience, when $\mu(s) = 1$ for some state $s$, we use the notation $\rho^\pi(s)$.

The optimal differential value function, then, is:
\begin{equation*}
    Q^*(s, a) = \mathbb{E}_P{R(s, a, s') - \sup_{\pi}\rho^{\pi}(s) + \mathbb{E}_P{\big [ \max_{a' \in \mathcal{A}} Q^*(s', a') \big ]}}
\end{equation*}

The goal under this formulation is to approximate $Q^*$, from which we can derive an optimal policy.

\subsection{Reward Construction} \label{sec:rewardconstruction}
The reward construction of \textsc{dps-max} should maximize average detections per second (DPS). Note that it is not sufficient to construct a reward function where the value is 1 if an event was detected, 0 otherwise. Average reward is maximized as in Equation \ref{averagereward}, which maximizes the average number of detections per decision step. Since the problem formulation is as an SMDP, however, maximizing \textit{detections per decision step} is not the same as maximizing \textit{detections per second}. Special care is needed, because actions take different lengths of time, making these two metrics not even approximately similar.

Reward construction is an important part of SMDP design, and many schemes deal with handling the time and decision step mismatch \cite{baykal2010semi}. The new reward function is designed specifically for the case of optimizing a rate, such as detections per second.

\begin{prop}
Take $\{(s_n, a_n)\}_{n \geq 0} \subset \mathcal{S}\times\mathcal{A}$ to be a trajectory generated from a policy $\pi$. Let $\{\phi_n\}_{n \geq 0} \subset \mathbb{R}$ be a sequence, and $\{t_n\}_{n \geq 0} \subset \mathbb{R}$ be an increasing sequence denoting the associated environmental time. Construct $R$ in the following way:

\begin{align*}
& R(s_0, a_0, s_1) \coloneqq 0 \\
& R(s_n, a_n, s_{n+1}) \coloneqq (n+1)\frac{\phi_{n+1}}{t_{n+1}} - n\frac{\phi_n}{t_{n}}
\end{align*}

Then $\rho^\pi(s_0) = \liminf_{n \to \infty} \frac{\Expect{\phi_n}}{t_n}$
\end{prop}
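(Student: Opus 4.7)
The proof is essentially a telescoping computation, so the plan is to evaluate the partial sum $\sum_{k=0}^{n-1} R(s_k, a_k, s_{k+1})$ explicitly, divide by $n$, and then take the $\liminf$ of the expectation.

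First, I would substitute the two-part definition of $R$. The base case contributes nothing, and for $k \geq 1$ each term has the form $(k+1)\frac{\phi_{k+1}}{t_{k+1}} - k\frac{\phi_k}{t_k}$, which is a telescoping difference. Summing from $k = 0$ to $n-1$, the intermediate terms cancel and I am left with the boundary contributions
\[
    \sum_{k=0}^{n-1} R(s_k, a_k, s_{k+1}) \;=\; n\,\frac{\phi_n}{t_n} \;-\; \frac{\phi_1}{t_1}.
\]
(The $k = 0$ term is $0$ by fiat; if one applied the general formula at $k=0$ it would instead produce $\phi_1/t_1$, so the boundary bookkeeping is the one place I need to be careful.)

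Next I would divide by $n$, apply linearity of expectation, and take the $\liminf$. This yields
\[
    \rho^\pi(s_0) \;=\; \liminf_{n \to \infty} \mathbb{E}\!\left[\frac{\phi_n}{t_n} - \frac{1}{n}\,\frac{\phi_1}{t_1}\right] \;=\; \liminf_{n \to \infty} \mathbb{E}\!\left[\frac{\phi_n}{t_n}\right],
\]
where the correction term vanishes because $\frac{1}{n}\mathbb{E}[\phi_1/t_1] \to 0$ (assuming, as is implicit in the setup, that $\mathbb{E}[\phi_1/t_1]$ is finite). This matches the claimed expression in the proposition, interpreting $\mathbb{E}[\phi_n]/t_n$ as shorthand for $\mathbb{E}[\phi_n/t_n]$ (which is the natural reading when $t_n$ is random).

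The main obstacles are minor rather than conceptual: (i) the indexing mismatch at $n=0$, which I would resolve by simply treating the first term as a defined boundary condition so the telescope starts cleanly; and (ii) confirming that the vanishing-correction argument is legitimate, for which it suffices to assume the standard integrability condition that each $\mathbb{E}[\phi_k/t_k]$ is finite (guaranteed in the continual area sweeping setting where $\phi_n$ counts detections and $t_n$ is bounded away from zero). No exchange-of-limit lemma such as Fatou is needed, because the $\liminf$ is applied after the expectation, not before.
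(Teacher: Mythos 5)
Your proof is correct and takes essentially the same telescoping route as the paper's own one-line argument. You are in fact slightly more careful than the paper, which writes the partial sum as if the general formula applied at $k=0$ and thus silently omits the $-\phi_1/t_1$ boundary term produced by the base case $R(s_0,a_0,s_1)\coloneqq 0$; as you observe, that term contributes only $O(1/n)$ after dividing by $n$ and vanishes in the limit.
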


\begin{proof}
$\\$
Substituting in (\ref{averagereward}): 
\begin{align*}
    \rho^\pi(s_0) &= \liminf_{n \to \infty} \frac{1}{n} \Expect{\Big [ \sum_{k = 0}^{n-1} (k+1)\frac{\phi_{k+1}}{t_{k+1}} - k\frac{\phi_k}{t_k} \Big ] }
\end{align*}

The sum telescopes, leading to:
\begin{align*}
    \rho^\pi(s_0) &= \liminf_{n \to \infty} \frac{1}{n} n\frac{\Expect{\phi_n}}{t_n} 
\end{align*}
\end{proof}

The corollary of this proposition is that setting $\phi_n$ to be the number of detections seen at decision step $n$ provably optimizes average detections per second.

\subsection{Deep R-Learning}
R-Learning is a classical approach for learning an optimal differential value function \cite{schwartz1993reinforcement}. Its purpose is to handle infinite-horizon tasks where finding a policy to maximize average reward is more meaningful than temporal discounting. For this problem, discounting is not a good fit as in order to optimize average detections per second (DPS), detections in the future cannot be considered less valuable. To represent the value function, a suitable function approximator is needed. We introduce a deep neural network variant of R-Learning based on double DQN \cite{van2016deep}, which allows for the integration of neural networks with double Q-Learning.

Algorithm \ref{alg:deeprl} describes the new algorithm. The key changes to double DQN are highlighted here. First, the target in line 9 reflects the R-Learning update by subtracting out the running average reward estimate. Lines 11 and 12 compute the change to $\rho$. Here, the TD errors of the batch are averaged so long as the actions taken were close to optimal. As a result $\delta$ essentially controls a bias-variance trade off of average reward updates. A low $\delta$ will lead to lower bias as it is closer to approximating $\rho^{\pi^*}$, but there will be higher variance as it takes smaller batch averages. If line 12 attempts to take the average of an empty set, then the subsequent if-statement will not execute.

\begin{algorithm}[tb]
\caption{Deep R-Learning}
\label{alg:deeprl}
\begin{algorithmic}[1] 
\STATE Initialize empty experience replay buffer $\mathcal{D}$.
\STATE Initialize network $Q$ with random weights $\boldsymbol{\theta} = \boldsymbol{\theta^-}$.
\STATE Initialize $\rho = 0$.
\FOR{t = 1, \dots, M}
\STATE Select an action $a_t$ according to an action selection mechanism like $\epsilon$-greedy.
\STATE Execute $a_t$ and store the resulting transition $(s_t, a_t, r_t, s_{t+1})$ in $\mathcal{D}$.
\STATE Randomly sample a batch of transitions $\{(s_j, a_j, r_j, s_{j+1})\}$ from $\mathcal{D}$.
\STATE Let $q_{max} = Q \left( s _ { j + 1 } , \operatorname { argmax }_a Q \left( s_{ j + 1 } , a ; \boldsymbol { \theta } \right) ; \boldsymbol { \theta^- } \right)$
\STATE Let $y_j = r_j - \rho + q_{max}$
\STATE Take a gradient descent step on $\big (y_j - Q \left( s _j , a_j ; \boldsymbol { \theta } \right) \big )^2$.
\STATE Let $\Delta_j = y_j - Q \left( s _j , a_j ; \boldsymbol { \theta } \right)$
\STATE Let $\Delta = \operatorname{avg}\{\Delta_j \  \text{s.t.} \  |Q(s_j, a_j)-\max_a Q(s_j, a)| < \delta\}$
\IF {$\Delta$ is well-defined}
\STATE $\rho = \rho + \alpha\Delta$ for learning rate $\alpha$
\ENDIF
\STATE Every $\tau$ steps, set $\boldsymbol{\theta^-} = \boldsymbol{\theta}$.
\ENDFOR
\end{algorithmic}
\end{algorithm}

\subsection{$Q$ Function Representation}\label{sec:nnet}
To represent $Q$, Algorithm \ref{alg:deeprl} uses an encoder-decoder network as a way of exploiting the topology of $\mathcal{S}$ and $\mathcal{A}$. For a practical map, there can be millions of actions, since the agent can choose to move anywhere (resulting in close to height$\times$width of $G$ number of actions). Value based methods are normally poorly suited for such a large action space, but this choice of architecture overcomes that limitation. Due to convolutional layers, updates made to the $Q$-value of a state-action pair immediately generalize to a local neighborhood.
Figure \ref{fig:network} illustrates the architecture, which shows an encoder-decoder network where the environment map, robot position, and event uncertainty are represented as grids and fed in as the input. The output is the action-value for each cell (action) in the map. The max-pool and upsampling layers help coalesce the action values of neighbors.

\begin{figure}[h]
    \centering
    \includegraphics[width=0.75\columnwidth]{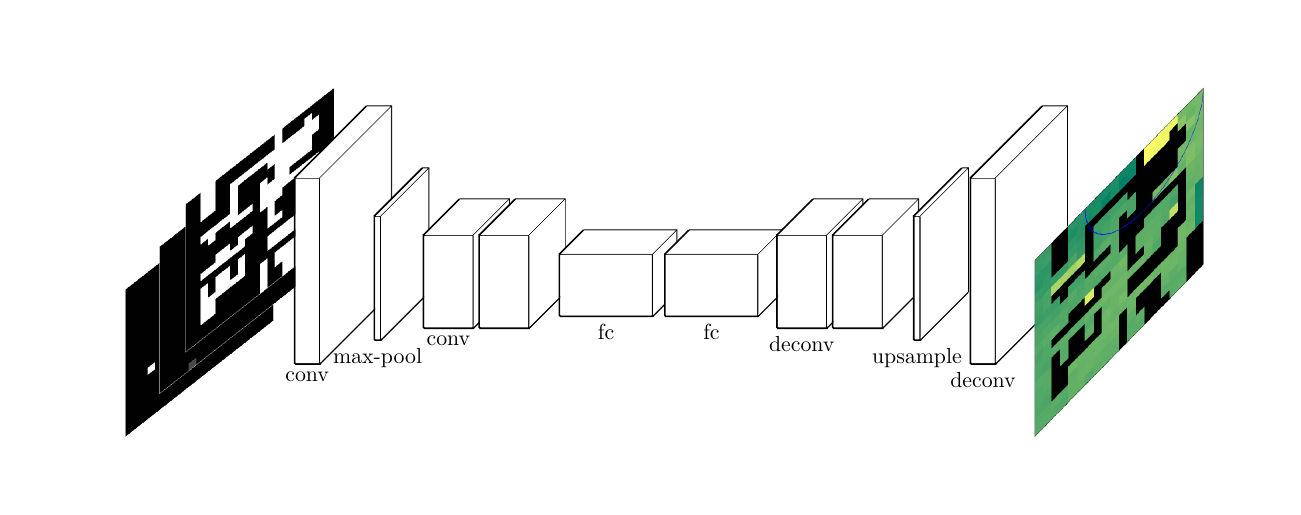}
    \caption{
    In these experiments, the architecture comprises a $32{\times}5{\times}5@3$ conv, $2{\times}2@2$ max-pool, two $16{\times}4{\times}4@2$ conv, and a $500$ unit fully connected layer, while the decoder part is the reverse with the conv layers swapped for deconv layers and upsampling for the max-pool.}
    \label{fig:network}
\end{figure}

\section{Experiments}
\label{sec:experiments}
We evaluate \textsc{dps-max} on two domains: an abstract gridworld, and a simulated house in the Gazebo robotics simulator. The gridworld is used to compare the performances of the \textsc{dps-max} approach and the \textsc{adt-greedy} algorithm under different environmental assumptions. We hypothesize that (1) \textsc{dps-max} will outperform \textsc{adt-greedy} when the environment violates the binomial assumption and the unbounded events assumption as defined in section \ref{adtgreedydefinition}, (2) \textsc{adt-greedy} will do better under the ADT metric, while \textsc{dps-max} will do better under the DPS metric, and (3) \textsc{dps-max} is able to incorporate knowledge that influences event appearance into its state space, thereby leading to improved performance.

\begin{figure}[t]
     \centering
    \begin{subfigure}{0.45\columnwidth}
        \centering
        \includegraphics[width=.9\columnwidth]{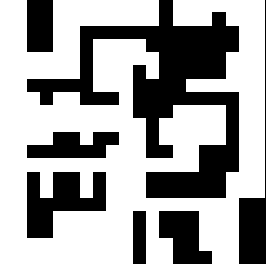}
        \caption{}
        \label{fig:gridworld}
    \end{subfigure}
    \begin{subfigure}{0.45\columnwidth}
        \centering
        \includegraphics[width=.9\columnwidth]{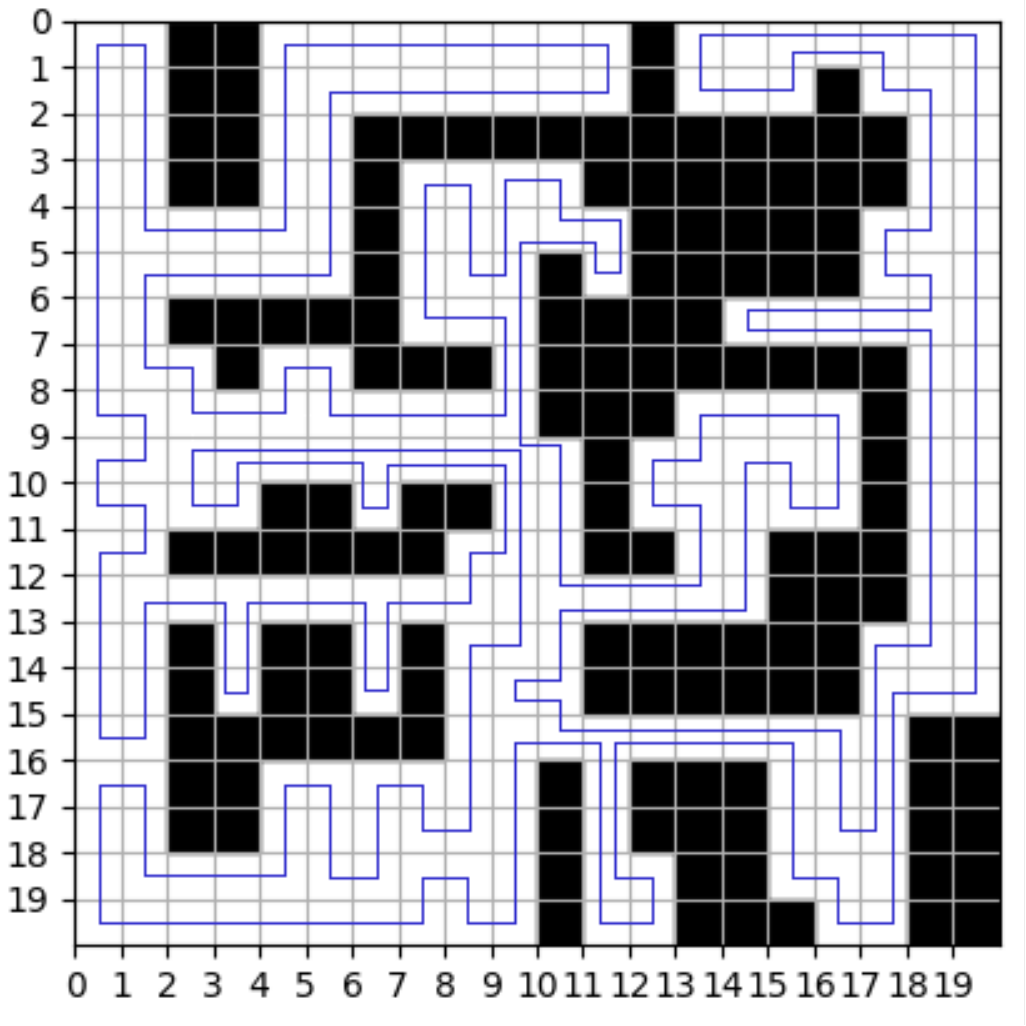}
        \caption{}
        \label{fig:path}
    \end{subfigure}
    \caption{
    (a) Gridworld where black represents walls. In 5 random cells, events have a periodicity ranging from 10 to 50 seconds in the periodic case, or show up with a fixed probability ranging from $1/10$ to $1/50$ per second in the binomial case. (b) One coverage patrolling path for (a).}
\end{figure}

\subsection{Gridworld Experiments}

This evaluation tests on a gridworld in order to evaluate the performance of the proposed algorithm under the different environmental assumptions and the two metrics. We test our hypothesis that \textsc{dps-max} outperforms \textsc{adt-greedy} when assumptions of \textsc{adt-greedy} are violated, and study how much the ADT and DPS metrics align with each other.

\subsubsection{Setup}

Figure \ref{fig:gridworld} illustrates the setup for the following gridworld experiments.
A $20 \times 20$ grid is populated with random locations at which events may occur. In each cell, either events occur periodically, or the number of events follow the binomial distribution. In the binomial case, events appear with a fixed probability between $1/10 - 1/50$ each time step, and in the periodic case, according to a fixed period between $10 - 50$ time steps. These events occur in $1$ of $5$ fixed locations which are randomly generated at the start of each experiment, with a probability or time period associated with each of the $5$ locations at the start of the experiment. We also evaluate the effects of the bound on the number of events per grid cell by varying the bound from 6 to 1. This set of experiments tests the effect of the unbounded assumption made by the \textsc{adt-greedy} algorithm, with 6 being closer to the original assumption and 1 completely violating it.

For each configuration, 8 grids of random event positions and occurrence probabilities/periods are generated. Since the instances have randomly generated event positions, the best achievable DPS and ADT are different for each instance. We compare \textsc{dps-max} to \textsc{adt-greedy} by taking the percentage difference in our DPS or ADT over the DPS or ADT of \textsc{adt-greedy}, averaged across each configuration.

In this set of experiments, the learning rate $\alpha$ in Algorithm~\ref{alg:deeprl} is set to 0.0001. The exploration strategy is to initialize the agent in a random position, run $\epsilon$-greedy exploration for 50 steps, and then reset the agent to a random position. The low learning rate and the frequent resets are used to ensure sufficient exploration.\footnote{Otherwise, exploration tends to stick around grid cells with frequent events, and not cover enough of the state space; causing high variances in evaluations since the robot's initial position is random.} The stopping criterion for training is the following: after every 20,000 training steps, the model is evaluated by executing the policy at a random initial position, and training terminates if the DPS has not improved in the last 10 evaluations.\footnote{On average, training terminated in around 400,000 steps.}

\subsubsection{Results}
Figure~\ref{fig:adt} shows the average percentage difference in DPS, where higher than $0$ means \textsc{dps-max} detects more events per unit time than \textsc{adt-greedy}. Figure~\ref{fig:dps} shows the average percentage difference in ADT, where lower than $0$ means on average \textsc{dps-max} takes less time between event appearance and detection than \textsc{adt-greedy}. The error bars in the figures report standard deviation.

\begin{figure}[t]
    \centering
    \begin{subfigure}{0.75\columnwidth}
        \centering
        \includegraphics[width=.9\columnwidth]{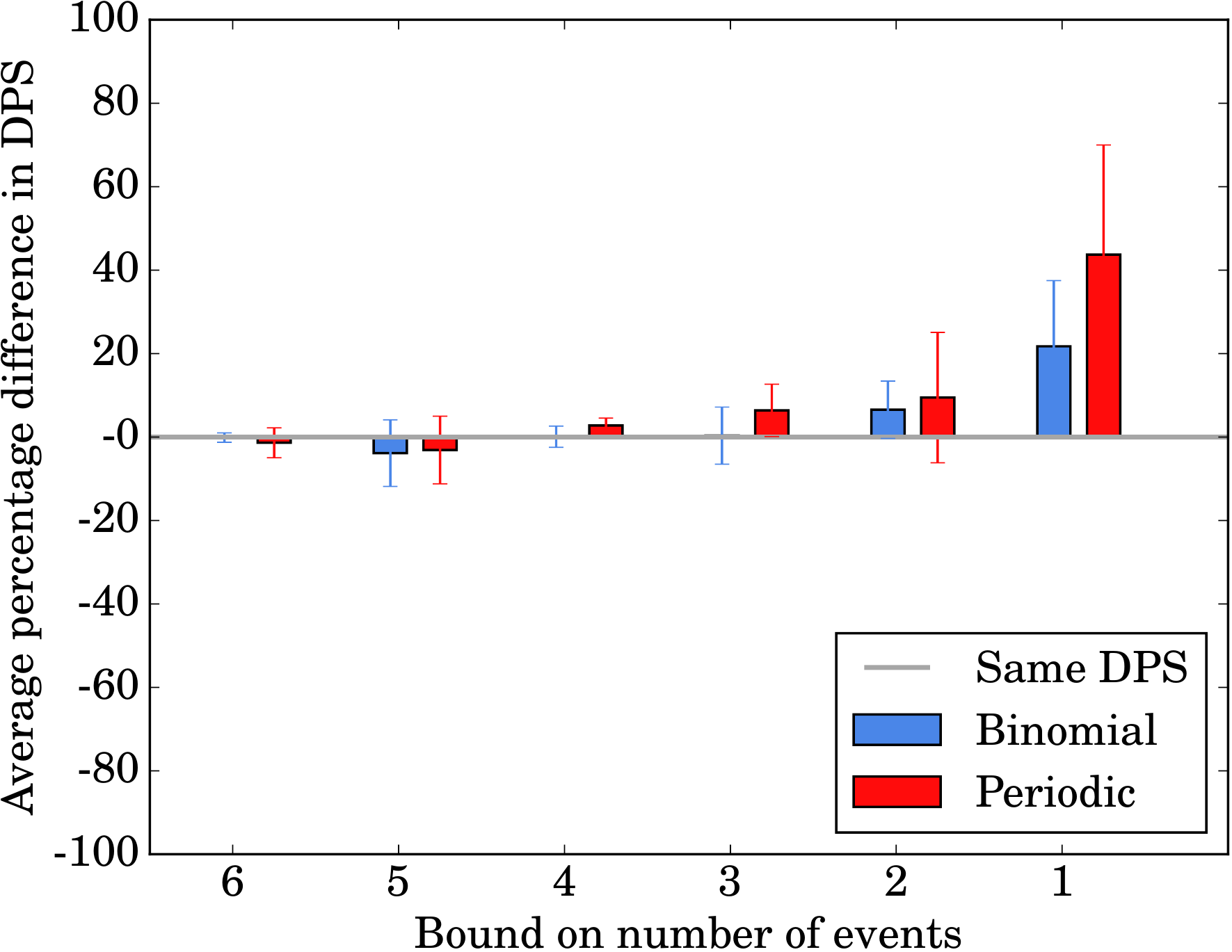}
        \caption{}
        \label{fig:adt}
    \end{subfigure}
    
    \begin{subfigure}{0.75\columnwidth}
        \centering
        \includegraphics[width=.9\columnwidth]{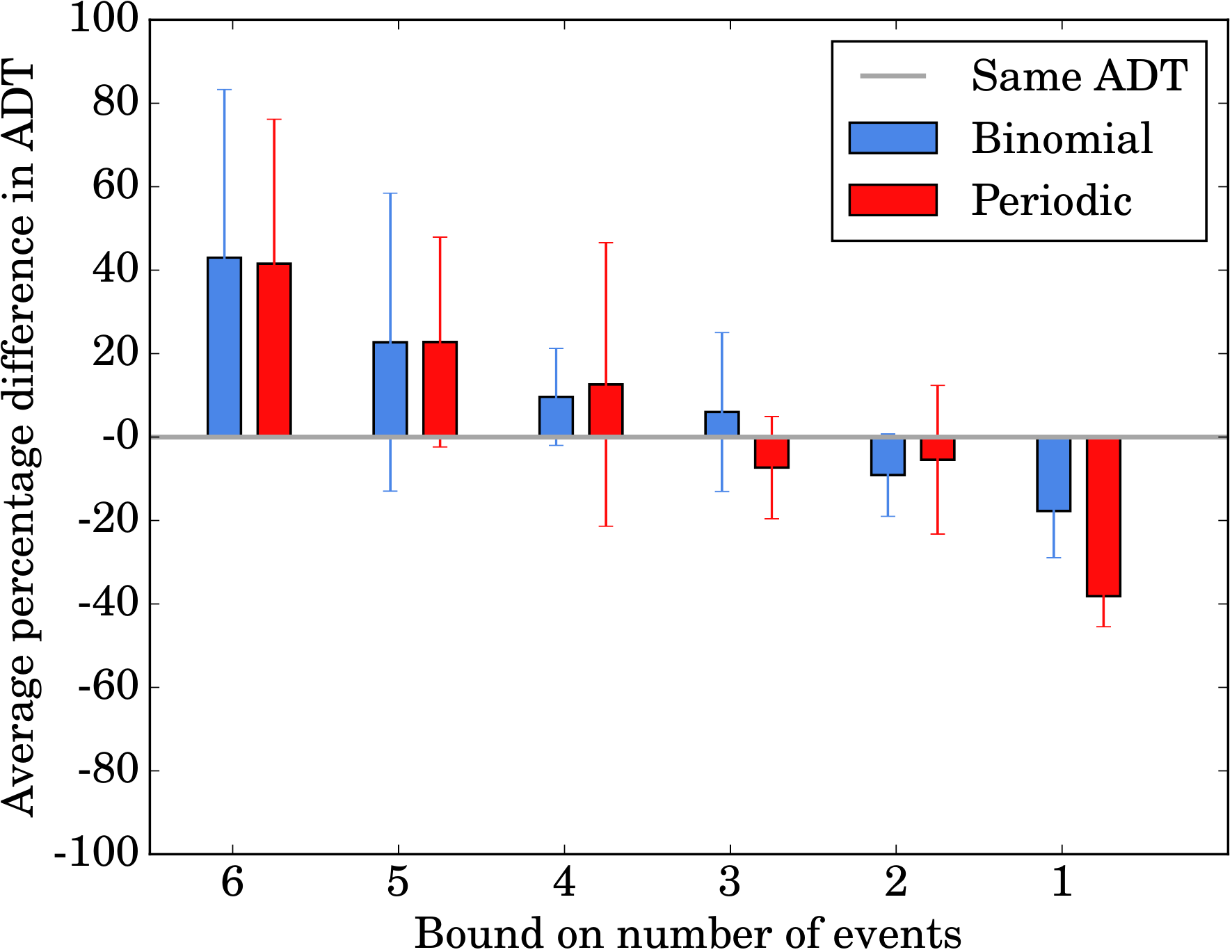}
        \caption{}
        \label{fig:dps}
    \end{subfigure}
    \caption{(a) Average percentage difference in detections per second (DPS) of \textsc{dps-max} over \textsc{adt-greedy}, where higher than $0$ means \textsc{dps-max} detects more events per unit time than \textsc{adt-greedy}. (b) Average percentage difference in average detection time (ADT) of \textsc{dps-max} over \textsc{adt-greedy}, where lower than $0$ means \textsc{dps-max} takes less time between event appearance and detection than \textsc{adt-greedy}.}
\end{figure}

As shown by both figures, \textsc{dps-max} has the most advantage over the \textsc{adt-greedy} approach when the binomial and unbounded events assumptions are most violated. When event appearance is periodic and the number of events in each cell is bounded by $1$, \textsc{dps-max} achieves the best improvement in DPS ($43.7\%$) and the most reduction in ADT ($38.4\%$). The reduction in ADT is surprising because unlike \textsc{adt-greedy}, \textsc{dps-max} does not directly optimize for ADT. In fact, the two metrics align except for a few cases. For instance, when the bound is $4$ in the periodic setting, \textsc{dps-max} has better DPS but worse ADT compared to \textsc{adt-greedy}.

When the bound on events in every grid cell is high, \textsc{dps-max} does not outperform \textsc{adt-greedy} on either metric. One possible explanation is that detecting many events in one step gives a large reward, which causes instability in learning. Such a scenario is not the focus of this work, so we leave further investigation of this case to future work. 

For comparison, we tested coverage patrolling \cite{choset2001coverage}, which ignores the dynamics of event appearance by covering all grid cells. In the case of binomial event appearance and bound = 1, following the path in Figure~\ref{fig:path} leads to 266.7\% in average percentage difference in ADT and -65.8\% in average percentage difference in DPS compared to \textsc{adt-greedy}. As expected, this performance is worse on both metrics than \textsc{adt-greedy} and \textsc{dps-max}, showcasing the advantage of efficiently biasing travel time in favor of cells with frequent events.

\subsection{Incorporating Extra Knowledge}
Leveraging information about external factors in the environment can improve performance. Consider a person moving around in a building and doing activities like throwing away trash or moving belongings around. In this setting, the robot would benefit from incorporating knowledge of where the person is, as the appearance of events is highly correlated. While \textsc{adt-greedy} does not provide a way to add such knowledge, \textsc{dps-max} does by adding information to the state. To illustrate, we conduct an experiment in which a person walks randomly on the grid in Figure~\ref{fig:gridworld} and causes an event with a $30\%$ chance in each step, with the number of events in each cell bounded by 1. The algorithm is the same as above with the only difference being that a grid with the location of the person is added to the state for \textsc{dps-max}.
With the location of the person incorporated, \textsc{dps-max} achieves a 3922.6\% increase in DPS over \textsc{adt-greedy} and a 39.2\% reduction in ADT. Without the extra information, the average increase in DPS is 631.7\% and \textsc{dps-max} has worse ADT than \textsc{adt-greedy} with an average increase of 1029.2\%.
Thus, the formulation of \textsc{dps-max} allows the function approximator to learn the association between the person and events leading to significantly better performance.

\subsection{Recognizing Geometric Features}
Another desirable feature for a continual area sweeping agent is the ability to recognize previously seen geometric features. If objects are regularly placed on some piece of furniture, and the robot sees the furniture move during training, then the robot should naturally recognize that events that used to occur near the old location of the furniture are now likely to occur near the new location. This recognition can be accomplished by \textsc{dps-max} due to its convolutional neural network's ability to extract local features from a grid-based state. In contrast, \textsc{adt-greedy} by design forgets about learned events which is necessary as there is neither a feature-based representation of the map nor a feature extractor, so new information about grid cells must override old information.

\subsubsection{Simulation Environment}
Gazebo is a high fidelity robot simulator \cite{Koenig04designand} that we use to simulate the Toyota Human Support robot in an indoor environment. This set of experiments presents a realistic simulation of a robot. First, actions taken in Gazebo are noisy; the same action can take varying amounts of time to execute, and actions sometimes fail, causing the robot to stop midway through. Additionally, the environment map is large (300x300 grid representing a 900 square meter area). Successful learning in Gazebo requires capabilities similar to those of a real robot. 


\subsubsection{Setup}
The robot is placed in an empty room with a cubicle. The robot is trained off-policy where data is gathered by having the robot navigate through a human generated path roughly covering the whole room. During this training period, the robot sees the cubicle from different positions. As these experiments primarily concern the navigational aspects of the problem, robot perception is bypassed and, instead, events are classified as having been detected if the robot passes within 2 meters. The following results visualize the learned policy adapting to changes in the cubicle's location without further learning.


\subsubsection{Events dependent on furniture}
In the first test, an event constantly appears in a cubicle. This cubicle is moved, and the robot experiences this change in position during training. There is also a second event continually firing at a fixed position. After training, the learned policy is run with the cubicle in two positions that the robot had seen earlier during training. The path from executing the policy with no exploration is shown in the top row of Figure \ref{fig:furndeep}, which represents the path that would result if the robot always moved to the location with the greatest action-value, but in practice the robot would cover the whole area using an exploration strategy such as $\epsilon$-greedy or softmax action selection. The fact that the \emph{same} policy produced tailored paths depending on the location of the cubicle shows that the robot was able to associate the visual appearance of the cubicle with the fact that events often appear there.

\begin{figure}[t]
    \centering
    \includegraphics[width=.35\columnwidth]{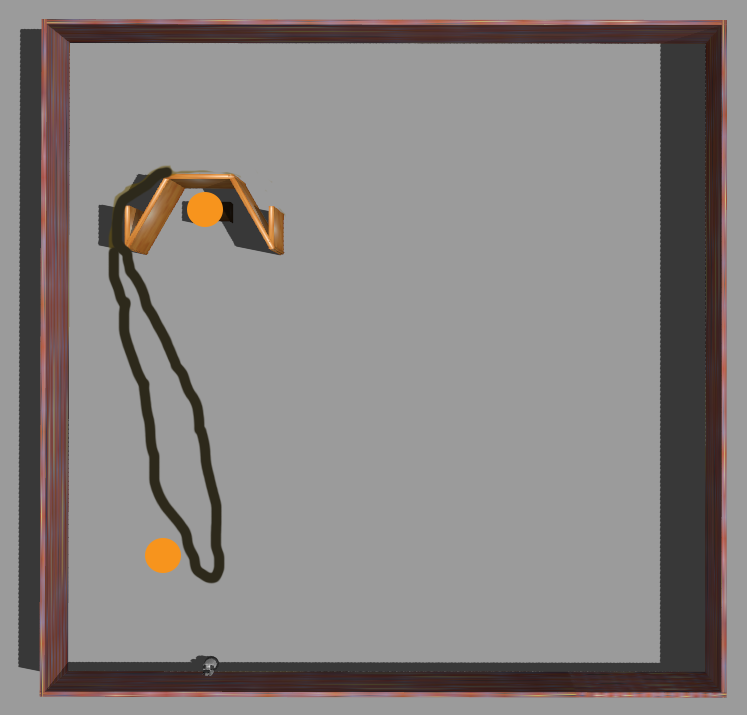}
    \includegraphics[width=.35\columnwidth]{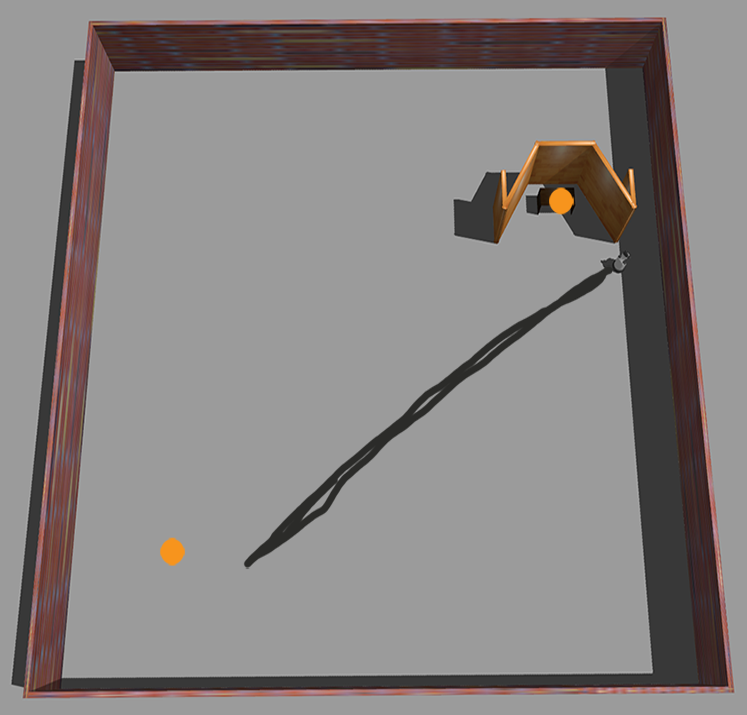}
    
    \includegraphics[width=.35\columnwidth]{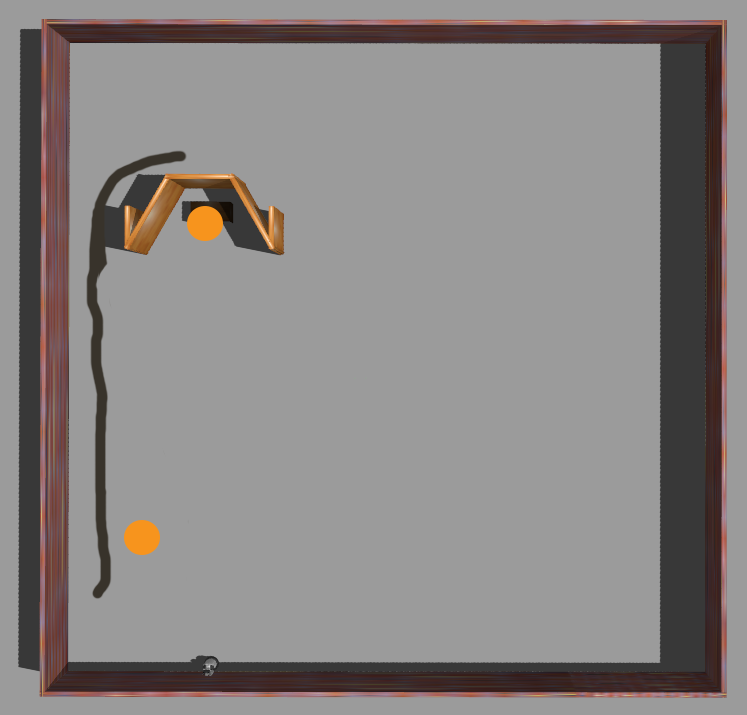}
    \includegraphics[width=.35\columnwidth]{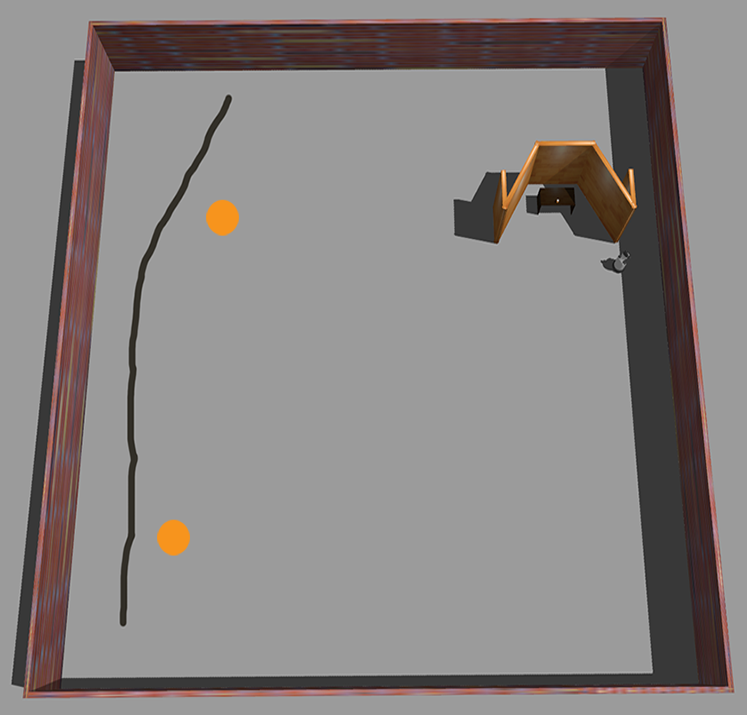}
    \caption{\textbf{Top:} Events dependent on furniture. \textbf{Bottom:} Events independent of furniture. Black line shows the path from executing the learned policy without exploration. Orange dots represent where events occur.}
    \label{fig:furndeep}
\end{figure}

\subsubsection{Events independent of furniture}
Remembering geometric features is not useful if the robot constructs false associations. If the cubicle moves, but events do not move with the cubicle, then the robot should simply ignore the geometry of the cubicle. To test this ability, the previous experiment is repeated, but with both objects fixed. The same learned policy has the robot ignore the moving cubicles as seen in the bottom row of Figure \ref{fig:furndeep}.

These two experiments show that the robot is able to memorize geometric features and recall them on demand, but only when they are truly relevant.

\section{Conclusion}

This paper extends the formulation of the continual area sweeping problem using an SMDP, and proposes a deep R-learning approach to maximize average detections per second. These two components comprise the main contribution of this paper, which is the introduction of the novel \textsc{dps-max} approach for the general class of continual area sweeping problems that we expect to arise frequently in the growing area of service robotics, and a demonstration that, under the assumptions most reflective of such scenarios, \textsc{dps-max} significantly outperforms the prior state-of-the-art algorithm for continual area sweeping. 
Furthermore, it is shown that \textsc{dps-max} can discover structure in event occurrence (such as geometric features) and leverage extra state information (such as the location of a person).
In future work, we plan to apply and test this approach on real service robots as a background default behavior that improves knowledge of the environment when no other task is active.

\addtolength{\textheight}{-12cm}   


\bibliographystyle{IEEEtran}
\bibliography{IEEEabrv,references}

\begin{thebibliography}{10}
\providecommand{\url}[1]{#1}
\csname url@samestyle\endcsname
\providecommand{\newblock}{\relax}
\providecommand{\bibinfo}[2]{#2}
\providecommand{\BIBentrySTDinterwordspacing}{\spaceskip=0pt\relax}
\providecommand{\BIBentryALTinterwordstretchfactor}{4}
\providecommand{\BIBentryALTinterwordspacing}{\spaceskip=\fontdimen2\font plus
\BIBentryALTinterwordstretchfactor\fontdimen3\font minus
  \fontdimen4\font\relax}
\providecommand{\BIBforeignlanguage}[2]{{%
\expandafter\ifx\csname l@#1\endcsname\relax
\typeout{** WARNING: IEEEtran.bst: No hyphenation pattern has been}%
\typeout{** loaded for the language `#1'. Using the pattern for}%
\typeout{** the default language instead.}%
\else
\language=\csname l@#1\endcsname
\fi
#2}}
\providecommand{\BIBdecl}{\relax}
\BIBdecl

\bibitem{ahmadi2005continuous}
M.~Ahmadi and P.~Stone, ``{Continuous area sweeping: A task definition and
  initial approach},'' in \emph{Proceedings of International Conference on
  Advanced Robotics (ICAR)}, 2005.

\bibitem{choset2001coverage}
H.~Choset, ``{Coverage for robotics - A survey of recent results},''
  \emph{Annals of Mathematics and Artificial Intelligence}, 2001.

\bibitem{gabriely2001spanning}
Y.~Gabriely and E.~Rimon, ``{Spanning-tree based coverage of continuous areas
  by a mobile robot},'' \emph{Annals of Mathematics and Artificial
  Intelligence}, 2001.

\bibitem{Galceran:2013:SCP:2542686.2542724}
\BIBentryALTinterwordspacing
E.~Galceran and M.~Carreras, ``A survey on coverage path planning for
  robotics,'' \emph{Robot. Auton. Syst.}, vol.~61, no.~12, pp. 1258--1276, Dec.
  2013. [Online]. Available:
  \url{http://dx.doi.org/10.1016/j.robot.2013.09.004}
\BIBentrySTDinterwordspacing

\bibitem{gatti2008game}
N.~Gatti, ``Game theoretical insights in strategic patrolling: Model and
  algorithm in normal-form.'' in \emph{ECAI}, 2008, pp. 403--407.

\bibitem{basilico2012patrolling}
N.~Basilico, N.~Gatti, and F.~Amigoni, ``Patrolling security games: Definition
  and algorithms for solving large instances with single patroller and single
  intruder,'' \emph{Artificial Intelligence}, vol. 184, pp. 78--123, 2012.

\bibitem{bovsansky2011computing}
B.~Bo{\v{s}}ansk{\`y}, V.~Lis{\`y}, M.~Jakob, and M.~P{\v{e}}chou{\v{c}}ek,
  ``Computing time-dependent policies for patrolling games with mobile
  targets,'' in \emph{The 10th International Conference on Autonomous Agents
  and Multiagent Systems-Volume 3}.\hskip 1em plus 0.5em minus 0.4em\relax
  International Foundation for Autonomous Agents and Multiagent Systems, 2011,
  pp. 989--996.

\bibitem{ayvali2017ergodic}
E.~Ayvali, H.~Salman, and H.~Choset, ``Ergodic coverage in constrained
  environments using stochastic trajectory optimization,'' in \emph{2017
  IEEE/RSJ International Conference on Intelligent Robots and Systems
  (IROS)}.\hskip 1em plus 0.5em minus 0.4em\relax IEEE, 2017, pp. 5204--5210.

\bibitem{infosurfing}
\BIBentryALTinterwordspacing
C.~R. Ratto, K.~R. Shipley, N.~Beagley, and K.~C. Wolfe, ``{Information surfing
  with the JHU/APL coherent imager},'' in \emph{Detection and Sensing of Mines,
  Explosive Objects, and Obscured Targets XX}, S.~S. Bishop and J.~C. Isaacs,
  Eds., vol. 9454, International Society for Optics and Photonics.\hskip 1em
  plus 0.5em minus 0.4em\relax SPIE, 2015, pp. 510 -- 524. [Online]. Available:
  \url{https://doi.org/10.1117/12.2176338}
\BIBentrySTDinterwordspacing

\bibitem{mavrommati2017real}
A.~Mavrommati, E.~Tzorakoleftherakis, I.~Abraham, and T.~D. Murphey,
  ``Real-time area coverage and target localization using receding-horizon
  ergodic exploration,'' \emph{IEEE Transactions on Robotics}, vol.~34, no.~1,
  pp. 62--80, 2017.

\bibitem{feinberg1996measurability}
E.~A. Feinberg, ``On measurability and representation of strategic measures in
  markov decision processes,'' \emph{Lecture Notes-Monograph Series}, pp.
  29--43, 1996.

\bibitem{baykal2010semi}
M.~Baykal-G{\"u}rsoy, ``Semi-markov decision processes,'' \emph{Wiley
  Encyclopedia of Operations Research and Management Sciences}, 2010.

\bibitem{schwartz1993reinforcement}
A.~Schwartz, ``A reinforcement learning method for maximizing undiscounted
  rewards,'' in \emph{Proceedings of the tenth international conference on
  machine learning}, vol. 298, 1993, pp. 298--305.

\bibitem{van2016deep}
H.~Van~Hasselt, A.~Guez, and D.~Silver, ``Deep reinforcement learning with
  double q-learning.'' in \emph{AAAI}, vol.~2.\hskip 1em plus 0.5em minus
  0.4em\relax Phoenix, AZ, 2016, p.~5.

\bibitem{Koenig04designand}
N.~Koenig and A.~Howard, ``Design and use paradigms for gazebo, an open-source
  multi-robot simulator,'' in \emph{In IEEE/RSJ International Conference on
  Intelligent Robots and Systems}, 2004, pp. 2149--2154.

\end{thebibliography}

\end{document}